\documentclass{article}

\usepackage{amsmath}
\usepackage{amsthm}
\usepackage{amssymb}
\usepackage{xcolor}
\usepackage{algorithm}
\usepackage{algpseudocode}
\usepackage[utf8]{inputenc} 
\usepackage[T1]{fontenc}    
\usepackage{hyperref}

\makeatletter
\theoremstyle{plain}
\newtheorem{thm}{\protect\theoremname}
\newtheorem{lem}[thm]{\protect\lemmaname}

\theoremstyle{definition}
\newtheorem{defn}[thm]{\protect\definitionname}

\makeatother
\providecommand{\corollaryname}{Corollary}
\providecommand{\definitionname}{Definition}
\providecommand{\lemmaname}{Lemma}
\providecommand{\theoremname}{Theorem}

\title{Private Learning Implies Online Learning: \\ An Efficient Reduction}
\author{Alon Gonen\footnote{Department of Computer Science, Princeton University} \and Elad Hazan\footnote{Google AI Princeton and Princeton University} \and Shay Moran\footnote{Department of Computer Science, Princeton University}}

\begin{document}

\maketitle
\begin{abstract}
We study the relationship between the notions of differentially private learning and online learning in games. 
Several recent works have shown that differentially private learning implies online learning, 
but an open problem of Neel, Roth, and Wu \cite{NeelAaronRoth2018} asks whether this implication is {\it efficient}. 
Specifically, does an efficient differentially private learner imply an efficient online learner? 

In this paper we resolve this open question in the context of pure differential privacy.
We derive an efficient black-box reduction from differentially private learning to online learning from expert advice. 
\end{abstract}

\section{Introduction}

\emph{Differential Private Learning} and \emph{Online Learning} are two well-studied areas in machine learning. 
While at a first glance these two subjects may seem disparate,  recent works gathered a growing amount of evidence which suggests otherwise.
For example, {\it Adaptive Data Analysis}~\cite{Dwork2015,Dwork2015a,Hardt2014,Feldman2017a,Bassily2015} shares strong similarities with adversarial frameworks studied in online learning, and on the other hand exploits ideas and tools from differential privacy. 
A more formal relation between private and online learning is manifested by the following fact:
\begin{center}
{\it Every privately learnable class is online learnable}.
\end{center}
This implication and variants of it were derived by several recent works~\cite{Feldman2014,bun2015differentially,Alon2018} (see the related work section for more details).
One caveat of the latter results is that they are non-constructive: they show that every privately learnable class has a finite {\it Littlestone dimension}. Then, since the Littlestone dimension is known to capture online learnability~\cite{Littlestone1986,benagnostic}, it follows that privately learnable classes are indeed online learnable. Consequently, the implied online learner is not necessarily {\it efficient}, even if the assumed private learner is.
Thus, the following question emerges: 
\begin{center}
Does efficient differentially private learning imply efficient online learning? 
\end{center}
This question was explicitly raised by Neel, Roth and Wu \cite{NeelAaronRoth2018}. 
In this work we resolve this question affirmatively under the assumption that the given private learner satisfies {\it \underline{Pure} Differential Privacy} (the case of {\it \underline{Approximate} Differential Privacy} remains open: see Section~\ref{sec:discussion} for a short discussion). 
We give an efficient black-box reduction which transforms an efficient pure private learner to an efficient online learner. Our reduction exploits a characterization of private learning due to \cite{Beimel2014}, together with tools from online boosting \cite{Beygelzimer}, 
and a lemma which converts oblivious online learning to adaptive online learning. The latter lemma is novel and may be of independent interest.

\subsection{Main result}
\begin{thm}\label{thm:main}
Let $\mathcal{A}$ be a differentially private learning algorithm for an hypothesis class $\mathcal{H}$ in the realizable setting. Denote its sample complexity by $m(\cdot,\cdot)$ and denote by $m_0:=m(1/4,1/2)$. Then, Algorithm  \ref{alg:privateToStrongOnline} is an efficient online learner  for $\mathcal{H}$ in the realizable setting which attains an expected regret of at most $O(m_0\ln(T))$.
\end{thm}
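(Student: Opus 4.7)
The plan is to combine three ingredients advertised in the introduction: the probabilistic-representation characterization of pure DP learning due to Beimel et al., an oblivious-to-adaptive reduction for online learning, and the online boosting framework of Beygelzimer et al.

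First, I would extract a collection of candidate experts from the private learner. Applying the Beimel et al. characterization to $\mathcal{A}$ (with sample complexity $m_0 = m(1/4,1/2)$) yields a probabilistic representation of $\mathcal{H}$: a distribution $\mathcal{P}$ over finite subclasses $\mathcal{H}' \subseteq \mathcal{H}$ of size $N = 2^{O(m_0)}$, with the guarantee that for every realizable distribution $D$, a random $\mathcal{H}' \sim \mathcal{P}$ contains, with constant probability, some hypothesis of $D$-error at most $1/4$. Roughly, $\mathcal{P}$ is the distribution of $\mathcal{A}$'s output on an arbitrary realizable sample; pure differential privacy forces this distribution to be essentially supported on $2^{O(m_0)}$ hypotheses independently of the input.

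Second, I would turn $\mathcal{H}'$ into a weak online learner. In the oblivious (i.i.d.\ realizable) setting the construction is immediate: running any standard expert algorithm (e.g., Hedge) over $\mathcal{H}'$ achieves expected error at most $1/4$ per round. The oblivious-to-adaptive lemma then lifts this guarantee to adaptively chosen realizable sequences, yielding a weak adaptive online learner with constant edge. Finally, I would feed the resulting weak learner into Beygelzimer et al.'s online boosting framework; aggregating $O(m_0)$ copies of the weak online learner and amplifying the constant edge produces a strong online learner whose expected number of mistakes on any realizable sequence is $O(m_0 \ln T)$, as claimed.

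The main obstacle is the oblivious-to-adaptive conversion, the novel lemma highlighted in the introduction. The difficulty is that the random subclass $\mathcal{H}'$ (or equivalently the weak learner's internal coins) is drawn once yet used throughout the game, so an adaptive adversary observing our predictions may effectively deduce $\mathcal{H}'$ and concentrate future examples where every $h \in \mathcal{H}'$ errs. A natural remedy is to re-sample the weak learner at geometrically spaced intervals and analyze the aggregate via a martingale concentration argument; the price of preserving the weak-learning guarantee against adaptive adversaries is exactly the $\ln T$ factor that appears in the final regret bound.
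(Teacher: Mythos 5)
Your three-step decomposition (probabilistic representation $\to$ weak oblivious online learner via experts $\to$ oblivious-to-adaptive lifting $\to$ online boosting) matches the paper's structure, and the first and last steps are essentially right. The gap is in the oblivious-to-adaptive conversion, which you yourself flag as the novel and hard part. Your proposed mechanism---re-sample the weak learner's random seed at geometrically spaced intervals and control the aggregate by martingale concentration---does not actually fix the problem: within each geometric block the random seed is held fixed, so an adaptive adversary that has observed predictions from earlier in that same block can learn the current seed and steer future examples to where every sampled hypothesis errs. The difficulty is per-round, not per-epoch, so coarsening the re-sampling schedule does not remove it. The paper's construction is sharper: initialize $T$ \emph{independent} copies of the oblivious learner, feed all of them the full history, but at round $t$ output only the prediction of the $t$-th copy. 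Then the prediction at time $t$ depends solely on the adversary's past moves and fresh randomness, and never on the algorithm's own past predictions; this is exactly the hypothesis of Lemma~4.1 of Cesa-Bianchi and Lugosi, which makes the adaptive regret equal to the oblivious regret, losslessly. (The paper also gives an elementary self-contained argument that pays a $\log T$ factor, but that is not the argument used for the final bound.)

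This also corrects your accounting of where $\ln T$ comes from. You attribute it to the adaptive conversion, but in the paper that conversion is free. The $\ln T$ arises from the boosting step: to drive the boosting error term $\exp(-N\gamma^2/2)T$ down to $O(1)$ with $\gamma=1/8$ one must take $N=\Theta(\ln T)$ copies of the weak learner, and each weak learner carries excess loss $T_0=O(m_0)$, giving the stated $O(m_0\ln T)$. Relatedly, the claim that ``aggregating $O(m_0)$ copies of the weak online learner'' suffices is off: the number of BBM copies scales with $\ln T$, while $m_0$ controls the number of experts inside each weak learner ($N=\Theta(\exp(m_0))$) and hence the weak learner's excess loss.

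One smaller point: your weak-learning step describes the oblivious case as ``i.i.d.\ realizable,'' but obliviousness only means the sequence is fixed in advance, not i.i.d. The paper handles an arbitrary fixed realizable sequence by applying the representation lemma to the empirical distribution of that sequence, which is why the argument works for any oblivious adversary.
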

The (standard) notation used in the theorem statment is detailed in Section \ref{sec:pre}.

\paragraph{Agnostic versus Realizable.}
It is natural to ask whether Theorem~\ref{thm:main} can be generalized to the agnostic setting, namely, 
whether Algorithm~\ref{alg:privateToStrongOnline} can be extended to an (efficient) online learner which 
achieves a sublinear regret against arbitrary adversaries. It turns out, 
that the answer is no, at least if one is willing to assume certain customary complexity theoretical assumptions and consider a non-uniform\footnote{Complexity theory distinguishes between uniform and non-uniform models, such as Turing machines vs.\ arithmetic circuits. In this paper we consider the uniform model. However, the lower bound we sketch applies to non-uniform computation.} model of computation.
Specifically, consider the class of all halfspaces over the domain~$\{0,1\}^n\subseteq \mathbb{R}^n$ whose margin is at least $\mathsf{poly}(n)$.
This class satisfies: (i) it is efficiently learnable by a pure differentially private algorithm~\cite{Blum2005,Feldman2017SODA,Nguyen2019}.
(ii) Conditioned on certain average case hardness assumptions, 
there is no efficient online learner\footnote{The result in~\cite{Daniely2016} is in fact stronger: it shows that there exists no efficient agnostic PAC learner for this class (see Theorem 1.4 in it). } for this class which achieves sublinear regret against arbitrary adversaries~\cite{Daniely2016}. We note that this argument only invalidates the possibility of reducing agnostic online learning to realizable private learning. The question of whether there exists an efficient reduction from agnostic online learning to \emph{agnostic} private learning remains open.


\paragraph{Proof overview.}
 Here is a short outline of the proof. 
 A characterization of differentially private learning due to \cite{Beimel2014} implies that if $\mathcal{H}$ is privately learnable in the pure setting, then the \textit{representation dimension} of $\mathcal{H}$ is finite. Roughly, this means that for any fixed distribution $\mathcal{D}$ over labeled examples, by repeatedly sampling the (random) outputs of the algorithm $\mathcal{A}$ on a ``dummy" input sample, we eventually get an hypothesis that performs well with respect to $\mathcal{D}$. 
 In more detail, if one samples (roughly) $\exp(1/\alpha)$ random hypotheses, then with high probability 
 one of them will have excess population loss $\leq\alpha$ with respect to $\mathcal{D}$.
 This suggests the following approach: sample $\exp(1/\alpha)$ random hypotheses ($\alpha$ will be specified later) 
 and treat them an a class of experts, denoted by $\mathcal{H}_{\alpha}$;
 then, use {\it Multiplicative Weights} to online learn $\mathcal{H}_\alpha$ with regret (roughly) $\sqrt{T\log\lvert H_\alpha\rvert} \approx \sqrt{T/\alpha}$, and thus the total regret will be
 \[\alpha\cdot T + \sqrt{T/\alpha},\]
 which is at most $T^{2/3}$ if we set $\alpha=T^{-1/3}$.
 
 There are two caveats with this approach: i) the number of experts in $H_\alpha$ is $\exp(T^{1/3})$, which is too large for applying Multiplicative Weights efficiently . 
 ii) A more subtle issue is that the above regret analysis only applies in the {\it oblivious} setting: an adaptive adversary may ``learn'' the random class $\mathcal{H}_\alpha$ from the responses of our online learner, and eventually produce a (non-typical) sequence of examples for which it is no longer the case that the best expert in $\mathcal{H}_\alpha$ has loss $\leq \alpha$.
To handle the first obstacle we only require a constant accuracy of $\alpha=1/4$, which we later reduce  using online boosting  from \cite{Beygelzimer}. As for the second obstacle, to cope with adaptive adversaries we propose a general reduction from the adaptive setting which might be of independent interest.

\subsection{Related work}
\paragraph{Online and private learning}
Feldman and Xiao \cite{Feldman2014} exploited techniques from communication complexity to show that every pure differentially private (DP) learnable class has a finite Littlestone dimension (and hence is online learnable). Their work actually proved that \emph{pure} private learning is strictly more difficult than online learning. That is, there exists classes with a finite Littlestone dimension which are not pure-DP learnable. More recently, Alon et al.~\cite{bun2015differentially,Alon2018} extended the former result to approximate differential privacy, showing that every approximate-DP learnable class has a finite Littlestone dimension.
It remains open whether the converse holds.

Another line of work by~\cite{NeelAaronRoth2018,BousquetLivniMoran19} exploit online learning techniques to derive results in differential privacy related to {\it sanitization} and {\it uniform convergence}.

\paragraph{Adaptive data analysis.} A growing area which intersects both fields of online learning and private learning is \textit{adaptive data analysis} (\cite{Dwork2015}, \cite{Dwork2015a},\cite{Hardt2014} \cite{Feldman2017a},\cite{Bassily2015}). This framework studies scenarios in which a data analyst wishes to test multiple hypotheses on a finite sample in an adaptive manner. 
The adaptive nature of this setting resembles scenarios that are traditionally studied in online learning, and the connection with differential privacy is manifested in the technical tools used to study adaptive data analysis, many of which were developed in differential privacy (e.g.\ composition theorems).

\paragraph{Oracle complexity of online learning.}
One feature of our algorithm is that it uses an oracle access to a private learner.
Several works studied online learning in oracle model (\cite{Agarwal2019LearningOracle,Hazan2016a,Dudik2017}). This framework is natural in scenarios in which it is computationally hard to achieve sublinear regret in the worst case, but the online learner has access to an offline optimization and/or learning oracle. Our results fall into the same paradigm, where the oracle is a differentially private learner.

\section{Definitions and Preliminaries} \label{sec:pre}
\subsection{PAC learning}
Let $\mathcal{X}$ be an \textit{instance space}, $\mathcal{Y}=\{-1,1\}$ be a \textit{label set}, and let $\mathcal{D}$ be an (unknown) distribution
over $\mathcal{X}\times\mathcal{Y}$. An ``$\mathcal{X}\to\mathcal{Y}$'' function is called a concept/hypothesis. The goal here is to design
a learning algorithm, which given a large enough input sample $S=((x_{1},y_{1})),\ldots,(x_{m},y_{m}))$
drawn i.i.d.$\,$from $\mathcal{D}$, outputs an hypothesis $h:{\mathcal{X}}\to\mathcal{Y}$ whose \textit{expected risk} is small compared to the best hypothesis in a \emph{hypothesis class} $\mathcal{H}$, which is a fixed and
known to the algorithm. That is, 
\[
L_{\mathcal{D}}(h)\lesssim\inf_{h'\in\mathcal{H}}L_{\mathcal{D}}(h')\quad\textrm{where}~~L_{\mathcal{D}}(h):=\mathbb{E}_{(x,y)\sim\mathcal{D}}[\ell(h(x),y)]~~\quad\ell(a,b)=\mathbf{1}_{a\neq b}\,.
\]
The distribution $\mathcal{D}$ is said to be realizable with respect to $\mathcal{H}$ if there exists $h^{\star}\in\mathcal{H}$ such that $L_{\mathcal{D}}(h)=0$. We also define the empirical risk of an hypothesis $h$ with respect to a sample $S=((x_{1},y_{1}),\ldots,(x_{m},y_{m}))$ as $L_{S}(h)=\frac{1}{m}\sum_{i=1}^{m}\ell(h(x_{i}),y_{i})$.
\begin{defn} \textbf{(PAC learning}) 
An hypothesis class $\mathcal{H}$ is PAC learnable with sample complexity $m(\alpha,\beta)$ if there exists
an algorithm $\mathcal{A}$ such that for any distribution $\mathcal{D}$ over $\mathcal{X}$, an accuracy and confidence parameters $\alpha,\beta\in(0,1)$,
if $\mathcal{A}$ is given an input sample $S=((x_{1},y_{m}),\ldots,(x_{m},y_{m}))\sim\mathcal{D}^{m}$ such that $m\ge m(\alpha,\beta)$, then it outputs an hypothesis $h:\mathcal{X}\rightarrow\mathcal{Y}$ satisfying $L_{\mathcal{D}}(h)\le\alpha$ with probability at least $1-\beta$. The class $\mathcal{H}$ is efficiently PAC learnable
if the runtime of $\mathcal{A}$ (and thus its sample complexity) are polynomial in $1/\alpha$ and $1/\beta$. If the above holds only
for realizable distributions then we say that $\mathcal{H}$ is PAC learnable in the realizable setting.
\end{defn}

\subsection{Differentially private PAC learning}

In some important learning tasks (e.g. medical analysis, social networks,
financial records, etc.) the input sample consists of sensitive data
that should be kept private. Differential privacy (\cite{Dinur2003,dwork2006calibrating})
is a by-now standard formalism that captures such requirements. 

The definition of differentially private algorithms is as follows.
Two samples $S',S''\in(\mathcal{X}\times\mathcal{Y})^{m}$ are called \textit{neighbors} if there exists at most one $i\in[m]$
such that the $i$'th example in $S'$ differs from the $i$'th example
in $S''$. 
\begin{defn} \textbf{(Differentially private learning)} A learning algorithm $\mathcal{A}$ is said to be $\epsilon$-differentially private\footnote{The algorithm is said to be $(\epsilon,\delta)$-approximate differentially
private if the above inequality holds up to an additive factor $\delta$.
In this work we focus on the so-called pure case where $\delta=0$. } (DP) if for any two neighboring
samples and for any measurable subset $\mathcal{F}\in\mathcal{Y}^{\mathcal{X}}$,
\begin{align*}
&Pr[\mathcal{A}(S)\in\mathcal{F}]\le\exp(\epsilon)Pr[\mathcal{A}(S')\in\mathcal{F}]~~\textrm{and}\\ & Pr[\mathcal{A}(S')\in\mathcal{F}]\le\exp(\epsilon)Pr[\mathcal{A}(S)\in\mathcal{F}]
\end{align*}
\end{defn}

\emph{Group privacy} is a simple extension of the above definition~\cite{dwork2014algorithmic}:
Two samples $S,S'$ are $q$-neighbors if they differ in at most $q$ of their pairs.
\begin{lem}
Let $\mathcal{A}$ be a DP learner. Then for any $q\in\mathbb{N}$
and any two $q$-neighboring samples $S,S'$ and any subset $\mathcal{F}\in\mathcal{Y}^{\mathcal{X}}\cap\mathrm{range}(\mathcal{A})$,
$Pr[\mathcal{A}(S)\in\mathcal{F}]\le\exp(\epsilon q)Pr[\mathcal{A}(S')\in\mathcal{F}]$
\end{lem}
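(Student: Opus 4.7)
The plan is to prove this by a standard hybrid argument: I will interpolate between $S$ and $S'$ through a chain of samples in which consecutive samples are ordinary (1-)neighbors, and then iteratively apply the definition of $\epsilon$-differential privacy along this chain. The proof is essentially an induction on $q$, and the main work is to make the interpolation precise; there is no real obstacle.

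More concretely, write $S=(z_1,\dots,z_m)$ and $S'=(z'_1,\dots,z'_m)$, and let $I\subseteq[m]$ be the set of indices on which $S$ and $S'$ disagree, so $|I|\le q$ by hypothesis. Enumerate $I=\{i_1,\dots,i_k\}$ with $k\le q$, and define a sequence of samples $S=S_0,S_1,\dots,S_k=S'$ by setting $S_j$ to be the sample obtained from $S$ by replacing the coordinates indexed by $i_1,\dots,i_j$ with the corresponding coordinates of $S'$. By construction, for each $j\in\{1,\dots,k\}$ the samples $S_{j-1}$ and $S_j$ differ in at most one position (namely $i_j$), so they are neighbors in the sense of the differential-privacy definition.

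Now I would apply $\epsilon$-DP of $\mathcal{A}$ to each consecutive pair $(S_{j-1},S_j)$ and multiply the resulting inequalities. For the fixed measurable set $\mathcal{F}$, this yields
\[
\Pr[\mathcal{A}(S)\in\mathcal{F}] \;=\; \Pr[\mathcal{A}(S_0)\in\mathcal{F}] \;\le\; e^{\epsilon}\Pr[\mathcal{A}(S_1)\in\mathcal{F}] \;\le\; \cdots \;\le\; e^{k\epsilon}\Pr[\mathcal{A}(S_k)\in\mathcal{F}] \;=\; e^{k\epsilon}\Pr[\mathcal{A}(S')\in\mathcal{F}].
\]
Since $k\le q$, the right-hand side is bounded by $e^{\epsilon q}\Pr[\mathcal{A}(S')\in\mathcal{F}]$, which is exactly the conclusion. (If one wished to present this as a clean induction, the base case $q=0$ is trivial because $S=S'$, and the inductive step uses one application of $\epsilon$-DP to replace a single coordinate, reducing the $q$-neighbor case to the $(q-1)$-neighbor case.)

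I would finish by noting that the argument is completely symmetric in $S$ and $S'$, so the analogous inequality with the roles swapped also holds, and nothing in the argument depends on $\mathcal{F}$ being restricted to $\mathrm{range}(\mathcal{A})$ beyond measurability, which is what allows the DP inequality to be applied at each step. The main (and only) subtlety is choosing the hybrid carefully so that each consecutive pair differs in exactly one coordinate; once the hybrid is fixed, the bound follows by a one-line telescoping.
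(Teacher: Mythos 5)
Your proof is correct and is the standard hybrid (telescoping) argument for group privacy. The paper itself does not supply a proof of this lemma; it cites the textbook of Dwork and Roth~\cite{dwork2014algorithmic}, where essentially this same chain-of-neighbors argument appears, so your approach coincides with the standard proof the paper is implicitly invoking.
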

Combining the requirements of PAC and DP learnability yields the definition
of private PAC (PPAC) learner. 
\begin{defn}
\textbf{(PPAC Learning)} A concept class $\mathcal{H}$ is differentially
private PAC learnable with sample complexity $m(\alpha,\beta)$
if it is PAC learnable with sample complexity $m(\alpha,\beta)$
by an algorithm $\mathcal{A}$ which is an $\epsilon=0.1$-differentially private.
\end{defn}
\paragraph{Remark.}
Setting $\epsilon=0.1$ is without loss of generality;
the reason is that there are efficient methods to boost the value of $\epsilon$
to arbitrarily small constants, see~\cite{Vadhan2017} and references within.

\subsection{Online Learning}

The online model can be seen as a repeated game between a learner $\mathcal{A}$ and an environment (a.k.a. adversary) $\mathcal{E}$.
Let $T$ be a (known\footnote{Standard doubling techniques allow the learner to cope with scenarios
where $T$ is not known.}) horizon parameter.
On each round $t\in[T]$ the adversary decides on a pair $(x_{t},y_{t})\in\mathcal{X}\times\mathcal{Y}$, and the learner decides on a prediction rule $h_t:\mathcal{X}\to\{0,1\}$.
Then, the learner suffers the loss $|y_{t}-\hat{y}_{t}|$, where $\hat y_t = h(x_t)$.
Both players may base their decisions on the entire history and may use randomness. 
Unlike in the statistical setting, the adversary $\mathcal{E}$ can generate the examples in an adaptive manner. 
In this work we focus on the {\it realizable} setting where it is assumed that
 the labels are realized by some target concept $c\in \mathcal{H}$, i.e., for
all $t\in[T]$, $y_{t}=c(x_{t})$.\footnote{However, the adversary does not need to decide on the identity of
$c$ in advance.} The measure of success is the expected number of mistakes done by
the learner: 
\[
\mathbb{E}[M_{\mathcal{A}}]=\mathbb{E}\bigl[\sum_{t=1}^{T}\ell(\hat{y}_{t},y_{t})\bigr],
\]
where the expectation is taken over the randomness of the learner and the adversary. 
An algorithm $\mathcal{A}$ is a (strong) online learner if for any horizon parameter $T$ and any realizable sequence
$((x_{1},y_{1}),\ldots,(x_{T},y_{T}))$, the expected number of mistakes made by $\mathcal{A}$ is sublinear in $T$.

\subsubsection{Weak Online Learning}

We describe an extension due to \cite{Beygelzimer} of the boosting framework 
(\cite{Schapire2012}) (from the statistical setting) to the online. 
\begin{defn}
\textbf{(Weak online learning) }An online learner $\mathcal{A}$ is
called a weak online learner for a class $\mathcal{H}$ with an \emph{edge} parameter $\gamma\in(0,1/2)$
and \textit{excess loss} parameter $T_{0}>0$ if for any horizon parameter
$T$ and every sequence $((x_{1},y_{1}),\ldots,(x_{T},y_{T}))$ realized
by some target concept $c\in\mathcal{H}$, the expected number of
mistakes done by $\mathcal{A}$ satisfies
\[
\mathbb{E}[M_{\mathcal{A}}]\le\left(\frac{1}{2}-\gamma\right)T+T_{0}~.
\]
\end{defn}

\subsubsection{Oblivious vs. Non-oblivious Adversaries}
The adversary described above is adaptive in the sense that it can
choose the pair $(x_{t},y_{t})$ based on the actual predictions $\hat{y}_{1},\ldots,\hat{y}_{t-1}$
made by the learner on rounds $1,\ldots,t-1$. An adversary is called
\textit{oblivious} if it chooses the entire sequence $((x_{1},y_{1}),\ldots,(x_{T},y_{T}))$
in advance.

\subsubsection{Regret bounds using Multiplicative Weights}
Although we focus our attention on the realizable setting, our development
also requires working in the so-called agnostic setting, where the
sequence $((x_{1},y_{1}),\ldots,(x_{T},y_{T}))$ is not assumed to
be realized by some $c\in\mathcal{H}$. The standard measure of success
in this setting is the expected \textit{regret} defined as 
\[
\mathbb{E}[\mathrm{Regret}_{T}]=\mathbb{E}\,\sum_{t=1}^{T}\ell(\hat{y}_{t},y_{t})-\inf_{h\in\mathcal{H}}\sum_{t=1}^{T}\ell(h(x_{t}),y_{t}).
\]
Accordingly, an online learner in this context needs to achieve a sublinear regret in terms of
the horizon parameter $T$.

When the class  $\mathcal{H}$ is finite, there is a well-known
algorithm named \textit{Multiplicative Weights} (MW) which maintains
a weight $w_{t,j}$ for each hypothesis (a.k.a. \textit{expert} in
the online model) $h_{j}$ according to 
\[
w_{1,j}=1~,\quad w_{t+1,j}=w_{t,j}\exp(-\eta\ell(h_{j}(x_{t}),y_{t})))
\]
where $\eta>0$ is a step-size parameter. At each time $t$, MW predicts
with $\hat{y}_{t}=h_{j}(x_{t})$ with probability proportional to
$w_{t,j}$. We refer to \cite{Arora2012} for an extensive survey
on Multiplicative Weights and its many applications. The following
theorem establishes an upper bound on the regret of MW. 
\begin{thm}\label{thm:mw}
\textbf{(Regret of MW) }If the class $\mathcal{H}$ is finite then
the expected regret of MW with step size parameter $\eta=\sqrt{\log(|H|)/T}$
is at most $\sqrt{2T\log|H|}$. 
\end{thm}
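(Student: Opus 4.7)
The plan is the classical potential-function argument. Define $\Phi_t := \sum_{j=1}^{|H|} w_{t,j}$, so that $\Phi_1 = |H|$ and the sampling distribution of MW at round $t$ is $p_{t,j} = w_{t,j}/\Phi_t$. Writing $\ell_{t,j} := \ell(h_j(x_t), y_t) \in \{0,1\}$ and $L_t := \sum_j p_{t,j}\, \ell_{t,j}$ for the expected instantaneous loss of MW, we have $\mathbb{E}[M_{\mathcal{A}}] = \sum_{t=1}^{T} L_t$, so the expected regret equals $\sum_{t=1}^{T} L_t - L^*$, where $L^* := \min_{j}\sum_{t=1}^{T} \ell_{t,j}$.

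For the upper bound on $\Phi_{T+1}$, I would use convexity of the exponential in the form $e^{-\eta \ell} \le 1 - (1-e^{-\eta})\ell$ for $\ell \in [0,1]$ (equality at the endpoints, convex in between). This yields $\Phi_{t+1} = \sum_j w_{t,j} e^{-\eta \ell_{t,j}} \le \Phi_t\bigl(1 - (1-e^{-\eta}) L_t\bigr)$. Taking logs, applying $\ln(1-z) \le -z$, and telescoping from $t=1$ to $T$ gives
\[
\ln \Phi_{T+1} \le \ln |H| - (1-e^{-\eta}) \sum_{t=1}^{T} L_t.
\]

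For the matching lower bound, the weight of any fixed expert $j$ satisfies $w_{T+1,j} = \exp\bigl(-\eta \sum_t \ell_{t,j}\bigr)$, so choosing $j = j^*$ to be the best expert gives $\Phi_{T+1} \ge \exp(-\eta L^*)$, hence $\ln \Phi_{T+1} \ge -\eta L^*$. Combining the two inequalities and rearranging produces
\[
\sum_{t=1}^{T} L_t \;\le\; \frac{\eta L^* + \ln |H|}{1 - e^{-\eta}}.
\]

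Subtracting $L^*$ from both sides, using the Taylor estimates $\eta - 1 + e^{-\eta} \le \eta^2/2$ and $1 - e^{-\eta} \ge \eta - \eta^2/2$, and applying the trivial bound $L^* \le T$ yields the standard decomposition $\mathbb{E}[\mathrm{Regret}_T] \le \eta T/2 + \ln|H|/\eta$ (plus lower-order terms). Plugging in $\eta = \sqrt{2\ln|H|/T}$ balances the two terms and recovers the claimed bound $\sqrt{2T\log|H|}$. No step is substantively difficult; the only delicate point is keeping the constants in the exponential inequalities tight enough to reproduce the precise $\sqrt{2}$ factor, which is why one uses the endpoint-tangent inequality above rather than the looser bound $e^{-x} \le 1 - x + x^2$.
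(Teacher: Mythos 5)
The paper does not prove Theorem~\ref{thm:mw}; it is invoked as a known black-box result and the reader is pointed to the Arora--Hazan--Kale survey. So there is nothing in the paper to compare against directly, but your proposal is the standard potential-function argument, and the overall plan is correct.

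That said, as written it does not quite deliver the theorem. Two issues. First, the theorem fixes $\eta = \sqrt{\log|H|/T}$, but in your last step you silently switch to $\eta = \sqrt{2\ln|H|/T}$ to balance the two terms; plugging the theorem's own $\eta$ into the decomposition you derived gives roughly $\tfrac{3}{2}\sqrt{T\log|H|}$, which exceeds $\sqrt{2T\log|H|}$. Second, the denominator $1 - e^{-\eta} \ge \eta - \eta^2/2$ leaves a residual factor $1/(1-\eta/2)$, which is precisely what you wave away as ``lower-order terms,'' so the constant $\sqrt{2}$ is not actually reached by this chain of inequalities. Both defects vanish if you replace the chord-plus-Taylor step with Hoeffding's lemma applied to the mixture $\sum_j p_{t,j} e^{-\eta \ell_{t,j}}$: since $\ell_{t,j}\in[0,1]$, this gives $\ln\Phi_{t+1} - \ln\Phi_t \le -\eta L_t + \eta^2/8$ directly, and telescoping with the same lower bound $\ln\Phi_{T+1}\ge -\eta L^*$ yields $\mathrm{Regret}_T \le \ln|H|/\eta + \eta T/8$ with no leftover denominator. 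At $\eta = \sqrt{\log|H|/T}$ this evaluates to $\tfrac{9}{8}\sqrt{T\log|H|} \le \sqrt{2T\log|H|}$, recovering the theorem exactly as stated. So the structure of your argument is right; the gap is only in the final bookkeeping, and using Hoeffding closes it cleanly.
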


\section{The Reduction and its Analysis}

In this section we formally present our efficient reduction from online learning to private PAC learning. Our reduction only requires a black-box oracle access to the the private learner.
The reduction can be roughly partitioned into 3 parts:
(i) We first use this oracle to construct an efficient weak online learner against oblivious adversaries. (ii) Then, we transform this learner so it also handles adaptive adversaries. This step is based on a general reduction which may be of independent interest. (iii) Finally,  we boost the weak online learner to a strong one using online boosting.

\subsection{A Weak Online Learner in the Oblivious Setting}
Let $\mathcal{A}_p$ be a PPAC algorithm with sample complexity $m(\alpha,\beta)$ for $\mathcal{H}$ and denote by $m_0:=m(1/4,1/2)=\Theta(1)$. We only assume an oracle access to $\mathcal{A}_p$, and in the first part we use it to construct a distribution over hypotheses/experts.
Specifically, let $S_0$ be a dummy sample consisting of $m$ occurrences of the pair $(\bar{x},0)$ where $\bar{x}$ is an arbitrary instance from $\mathcal{X}$. 
Note that the hypothesis/expert $\mathcal{A}_p(S_0)$ is random.\footnote{The definition of differential privacy implies that every private algorithm is randomized (ignoring trivialities).} 
\begin{defn}
Let $P_0$ be the distribution over hypotheses/experts  induced by applying $\mathcal{A}_p$ on the input sample $S_0$. 
\end{defn}\label{def:prior}

\begin{lem}\label{lem:prior}
For any realizable distribution $\mathcal{D}$ over $\mathcal{X}\times\mathcal{Y}$, with probability at least~$15/16$ over the draw of $N=\Theta(\exp(m_0))=\Theta(1)$ i.i.d. hypothesis  $h_1,\ldots, h_N\sim P_0$ , there exists $ i\in [N]$ such that $L_{\mathcal{D}}(h_i) \le 1/4$. 
\end{lem}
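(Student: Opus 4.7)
The plan is to use group privacy to compare the behavior of $\mathcal{A}_p$ on the dummy sample $S_0$ with its behavior on a ``real'' sample drawn from the distribution $\mathcal{D}$. The intuition is that although $S_0$ is completely uninformative about $\mathcal{D}$, differential privacy guarantees that the output distribution of $\mathcal{A}_p$ on $S_0$ cannot be too different from the output distribution on any sample of the same size — and by the PPAC guarantee, on a ``real'' sample the algorithm outputs a good hypothesis with constant probability. Formally applying group privacy then shows that $P_0$ puts non-negligible mass (of order $\exp(-\Theta(m_0))$) on good hypotheses, so polynomially many — in fact $\exp(\Theta(m_0))$ many — samples suffice to hit one with high probability.

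First I would use the PPAC property of $\mathcal{A}_p$ with accuracy $1/4$ and confidence $1/2$: for any realizable distribution $\mathcal{D}$,
\[
\Pr_{S \sim \mathcal{D}^{m_0},\,\mathcal{A}_p}\bigl[L_\mathcal{D}(\mathcal{A}_p(S)) \le 1/4\bigr] \;\ge\; 1/2.
\]
By an averaging argument there exists some fixed sample $S^\star \in (\mathcal{X}\times\mathcal{Y})^{m_0}$ such that
\[
\Pr_{\mathcal{A}_p}\bigl[L_\mathcal{D}(\mathcal{A}_p(S^\star)) \le 1/4\bigr] \;\ge\; 1/2.
\]
Let $\mathcal{F} := \{h \in \mathcal{Y}^{\mathcal{X}} : L_\mathcal{D}(h) \le 1/4\}$. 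Since $S^\star$ and the dummy sample $S_0$ both have length $m_0$, they differ in at most $m_0$ positions and are therefore $m_0$-neighbors in the sense of group privacy. Applying the group privacy lemma (in its symmetric form) with $q = m_0$ and $\epsilon = 0.1$ yields
\[
\Pr_{h \sim P_0}[h \in \mathcal{F}] \;=\; \Pr[\mathcal{A}_p(S_0) \in \mathcal{F}] \;\ge\; e^{-0.1\, m_0}\,\Pr[\mathcal{A}_p(S^\star) \in \mathcal{F}] \;\ge\; \tfrac{1}{2}\,e^{-0.1\, m_0}.
\]

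Finally, drawing $N$ hypotheses $h_1,\dots,h_N$ i.i.d.\ from $P_0$, independence gives
\[
\Pr\bigl[\forall i \in [N]:\, L_\mathcal{D}(h_i) > 1/4\bigr] \;\le\; \bigl(1 - \tfrac{1}{2}e^{-0.1\,m_0}\bigr)^N \;\le\; \exp\!\bigl(-\tfrac{N}{2}\,e^{-0.1\, m_0}\bigr).
\]
Choosing $N = C\,e^{0.1\, m_0} = \Theta(\exp(m_0))$ for a sufficiently large constant $C$ makes the right-hand side at most $1/16$, which is exactly the claimed bound.

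The only non-routine step is the averaging argument that produces the fixed ``good'' sample $S^\star$ from the PPAC guarantee, and checking that $\mathcal{F}$ is a measurable subset of $\mathcal{Y}^{\mathcal{X}}\cap \mathrm{range}(\mathcal{A}_p)$ so that group privacy applies; both are standard, so there is no substantial obstacle. The key conceptual point is simply that pure DP forces the output distribution on $S_0$ to overlap, up to an $e^{-\epsilon m_0}$ factor, with the output distribution on any particular sample of size $m_0$ — and in particular with a sample on which the learner succeeds.
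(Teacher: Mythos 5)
Your proof is correct and follows essentially the same route as the paper: fix the hypothesis-success event $\mathcal{F}$, use the PPAC guarantee plus averaging over $S$ to find a fixed sample $S^\star$ on which the learner succeeds with probability at least $1/2$, then invoke group privacy (with $q=m_0$) to transfer a lower bound on $\Pr[\mathcal{A}_p(S_0)\in\mathcal{F}]$, and finish by amplification over $N=\Theta(\exp(m_0))$ independent draws. The only notable difference is cosmetic — you carry the $1/2$ success probability cleanly through, whereas the paper's write-up has a stray ``$1/4$'' in the intermediate step.
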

\begin{proof}
Let $c \in \mathcal{H}$ be such that $L_\mathcal{D}(c)=0$, and denote by 
$$
\mathcal{H}(\mathcal{D}) = \{h\in \mathrm{range}(\mathcal{A}):~L_{\mathcal{D}}(h)\le 1/4\}~.
$$
By assumption, if we feed the PPAC algorithm $\mathcal{A}$ with a sample $S$ drawn according to $\mathcal{D}^m$ and labeled by $c$, then with probability at least $1/4$ over both the internal randomness of $\mathcal{A}$ and the draw of $S$, the output of $\mathcal{A}$ belongs to $\mathcal{H}(\mathcal{D})$. It follows that there exists at least one sample, which we denote by $\bar{S}$, such that with probability at least $1/2$ over the randomness of $\mathcal{A}$, the output $h=\mathcal{A}(\bar{S})$ belongs to $\mathcal{H}(\mathcal{D})$.  Since $\mathcal{A}$ is differentially private and $(\bar{S} ,S_0)$ are $m$-neighbors, we obtain that
$$
Pr[\mathcal{A}(S_0)\in \mathcal{H}(\mathcal{D},c)]\ge \frac{1}{2}\exp(-0.1m_0)~.
$$
Consequently if we draw $N=\Theta(\exp(m_0))$ hypotheses $h_j\sim P_0$ then with probability at least $15/16$, at least one of the $h_j$'s belongs to $\mathcal{H}(\mathcal{D})$. This completes the proof.
\end{proof}
Armed with this lemma, we proceed by applying the Multiplicative Weights method to the random class $H$ produced by the PPAC learner $\mathcal{A}_p$. The algorithm is detailed as Algorithm \ref{alg:weakOnlineOblivious}. The next lemma establishes its weak learnability in the oblivious setting.
\begin{algorithm}
\caption{Weak online learner for oblivious adversaries}
\label{alg:weakOnlineOblivious}
\begin{algorithmic}
\State \textbf{Oracle access:} Let $P_0$ denote the distribution from Definition \ref{def:prior}, and let $m_0=m(1/4,1/2)$, where $m(\alpha,\beta)$ is the sample complexity of the private learner $\mathcal{A}_p$.
\State {{\bf Set}:} $N = \Theta(\exp(m_0))$, $\eta=\sqrt{\frac{\log N}{T}}$.
\For {$j=1$ to $N$}
\State $h_j \sim P_0$
\State $w_{1,j} = 1 \qquad \forall j \in [N]$\Comment{Initializing MW w.r.t. $h_1,\ldots,h_N$}
\EndFor

\For {$t=1$ to $T$} 
\State Receive an instance $x_t$
\State Predict $\hat{y}_t = h_j(x_t)$ with probability $w_{t,j}/\sum_{k=1}^N w_{t,k}$
\State Receive the true label $y_t$
\State $w_{t+1,j} = w_{t,j} \exp(-\eta|y_t - h_j(x_t)|)$ 
\EndFor
\end{algorithmic}
\end{algorithm}
\begin{lem} \label{lem:weakOblivious}
For any \underline{oblivious} adversary and horizon parameter $T$, the expected number of mistakes made by Algorithm \ref{alg:weakOnlineOblivious} is at most $O\left(\sqrt{T m_0 \log T} + \frac{T}{4} \right)$. In particular, the algorithm is a weak online learner with an edge parameter $1/8$ and excess loss $T_0=O(1)$.
\end{lem}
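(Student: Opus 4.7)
My plan is to view the oblivious adversary's sequence $(x_1,y_1),\dots,(x_T,y_T)$ as being fixed in advance, independent of the learner's randomness. Let $\mathcal{D}_T$ be the empirical distribution on this sequence; since the sequence is realizable, $\mathcal{D}_T$ is realizable by some $c\in\mathcal{H}$. The key observation is that the experts $h_1,\dots,h_N$ are drawn independently of the sequence, so Lemma~\ref{lem:prior} applies to $\mathcal{D}_T$: with probability at least $15/16$ over the draw of $h_1,\dots,h_N\sim P_0$, some $h_{i^\star}$ satisfies $L_{\mathcal{D}_T}(h_{i^\star})\le 1/4$, which is equivalent to saying that this expert makes at most $T/4$ mistakes on the realized sequence.

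Next I would condition on this good event (call it $\mathcal{G}$) and apply Theorem~\ref{thm:mw} to the (finite) class $\{h_1,\dots,h_N\}$: Multiplicative Weights has expected regret at most $\sqrt{2T\log N}$ against the best expert in hindsight, so
\[
\mathbb{E}[M_{\mathcal{A}}\mid \mathcal{G}]\le \tfrac{T}{4}+\sqrt{2T\log N}.
\]
On the complementary event (probability $\le 1/16$) I bound the mistakes trivially by $T$. Taking total expectation and using $\log N=\Theta(m_0)$,
\[
\mathbb{E}[M_{\mathcal{A}}]\le \tfrac{15}{16}\Bigl(\tfrac{T}{4}+\sqrt{2T\log N}\Bigr)+\tfrac{T}{16}\le \tfrac{T}{4}+\tfrac{T}{16}+O\!\bigl(\sqrt{T\,m_0}\bigr),
\]
which is of the form $T/4+O(\sqrt{Tm_0\log T})$ as claimed (the $\log T$ is slack).

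Finally, to translate the bound into weak online learnability with edge $\gamma=1/8$ and excess $T_0=O(1)$, I rewrite
\[
\tfrac{T}{4}+\tfrac{T}{16}+O\!\bigl(\sqrt{Tm_0}\bigr)=\tfrac{5T}{16}+O\!\bigl(\sqrt{Tm_0}\bigr)\le \Bigl(\tfrac{1}{2}-\tfrac{1}{8}\Bigr)T+T_0
\]
whenever the $\sqrt{Tm_0}$ term is dominated by $T/16$, which holds for all $T\ge \Omega(m_0)$. For $T=O(m_0)$ the trivial bound $M_{\mathcal{A}}\le T=O(m_0)=O(1)$ (since $m_0=\Theta(1)$) suffices, so the excess loss $T_0$ can be taken to be a constant. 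The only subtle point is the probabilistic step: since the draw of experts happens once at the beginning, one must verify that Lemma~\ref{lem:prior} can be applied to the data-dependent distribution $\mathcal{D}_T$, and this is precisely where obliviousness of the adversary is essential — an adaptive adversary could choose $(x_t,y_t)$ to exploit the realized $h_1,\dots,h_N$, breaking the independence needed to invoke Lemma~\ref{lem:prior}.
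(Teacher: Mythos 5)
Your proof is correct and follows essentially the same route as the paper: both convert the fixed oblivious sequence into an empirical realizable distribution, invoke Lemma~\ref{lem:prior} to ensure a good expert among $h_1,\ldots,h_N$ with probability $15/16$, apply the Multiplicative Weights regret bound, and absorb the $\sqrt{T\log N}$ term into the edge and excess-loss parameters. The only cosmetic differences are that you explicitly condition on the success event and handle the small-$T$ regime separately, whereas the paper just adds $T/16$ to account for the failure probability and folds the $\sqrt{T\log N}$ term into a single choice $T_0 = C\log N = O(m_0)$.
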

\begin{proof}
Since the adversary is oblivious, it chooses the (realizable) sequence $(x_1,y_1)\ldots,(x_T,y_T)$ in advance. In particular, these choices do not depend on the hypotheses $h_1,\ldots,h_N$ drawn from $P_0$. Define a distribution $\mathcal{D}$ over $\mathcal{X}\times\mathcal{Y}$ by
$$
\mathcal{D}[\{(x,y)\}] = \frac{|\{t \in [T]:~(x_t,y_t) = (x,y)\}|}{T}.
$$
By the previous lemma we have that with probability at least $15/16$, there exists $j \in [N]$ such that
$$
\frac{1}{T} \sum_{t=1}^T   \ell(h_j(x_t), y_t ) =L_{\mathcal{D}}(h_i) \le 1/4.
$$
Using the standard regret bound of Multiplicative Weights (Lemma~\ref{thm:mw}), we obtain that the expected number of mistakes done by our algorithm is at most 
$$
2\sqrt{T \log N} + \frac{T}{4}+T/16.
$$
(The $T/16$ factor is because the success probability of $A_{p}$ is $15/16$, see Lemma~\ref{lem:prior}).
In particular, set $T_0 =  C\cdot\log N = O(m_0)$ for a sufficiently large constant $C$ such that,
\[
2\sqrt{T \log N} + \frac{T}{4}+\frac{T}{16} \leq \Bigl(\frac{1}{2} -\frac{1}{8} \Bigr)T + T_0.
\]
This concludes the proof.
\end{proof}

\subsection{General reduction from adaptive to oblivious environments}
In this part we describe a simple general-purpose extension from the oblivious setting to the adaptive setting.  Let $\mathcal{A}_o$ be an online learner for $\mathcal{H}$ that handles oblivious adversaries. We may assume that $\mathcal{A}_o$ is random since otherwise any guarantee with respect to oblivious adversary holds also with respect to adaptive adversary. Given an horizon parameter $T$, we initialize~$T$ instances of this algorithm (each of with an independent random seed of its own). Finally, on round $t$ we follow the prediction of the $t$-th instance, $\mathcal{A}_o^{(t)}$.

\begin{algorithm}
\caption{Reduction from Oblivious to Adaptive Setting}
\label{alg:obliviousToAdaptive}
\begin{algorithmic}
\State \textbf{Oracle access:} Online algorithm $\mathcal{A}_o$ for the oblivious setting.
\State \textbf{Initialize} $T$ independent instances of $A_o$, denoted $A_o^{(1)},\ldots,A_o^{(T)}$.
\For{$t=1$ to $T$}
\State $\hat{y}_t^{(j)}:=$ prediction of $\mathcal{A}_o^{(j)}$, $j=1,\ldots,T$.
\State Predict $\hat{y}_t= \hat{y}_t^{(t)}$
\EndFor
\end{algorithmic}
\end{algorithm}
\begin{lem} \label{lem:obliviousToAdaptive} 
Suppose that $\mathcal{A}_o$ is an online learner for a class $\mathcal{H}$  in the oblivious setting whose expected regret is upper bounded by $R(T)$. Then, the expected regret of Algorithm \ref{alg:obliviousToAdaptive} is also upper bounded by $R(T)$.
\end{lem}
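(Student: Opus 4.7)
The plan is to couple Algorithm~\ref{alg:obliviousToAdaptive} running against the adaptive adversary with a single run of $\mathcal{A}_o$ against a suitably defined (mixed) oblivious adversary, and show the two have the same expected regret.

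First I would set up notation. Let $R_1,\dots,R_T$ denote the independent random seeds of the copies $\mathcal{A}_o^{(1)},\dots,\mathcal{A}_o^{(T)}$, let $R_{\mathcal{E}}$ denote the adversary's internal randomness, and let $S=((x_1,y_1),\dots,(x_T,y_T))$ be the resulting sequence, with $S_{<t}=((x_1,y_1),\dots,(x_{t-1},y_{t-1}))$. The round-$t$ prediction of the $t$-th copy can be written as $\hat{y}_t^{(t)}=f(S_{<t},x_t;R_t)$ for a fixed deterministic function $f$ encoding $\mathcal{A}_o$'s behavior on a given transcript and seed. The crucial observation is that for each $t$, the pair $(S_{<t},x_t)$ depends only on $R_1,\dots,R_{t-1}$ and $R_{\mathcal{E}}$: the adversary's choice of $(x_\tau,y_\tau)$ at round $\tau\le t$ uses only $\hat{y}_1,\dots,\hat{y}_{\tau-1}$, i.e.\ outputs of copies $\mathcal{A}_o^{(1)},\dots,\mathcal{A}_o^{(\tau-1)}$. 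Consequently $(S_{<t},x_t)$ is \emph{independent} of $R_t$. Note that $R_t$ does influence the suffix $S_{>t}$, but that part of the sequence never enters the expression for $\hat{y}_t^{(t)}$.

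Next, I would define an oblivious adversary $\tilde{\mathcal{E}}$ which first samples $S$ from its marginal distribution in the adaptive game and then plays this fixed sequence deterministically. Because $\tilde{\mathcal{E}}$ is a mixture of deterministic oblivious adversaries and $\mathcal{A}_o$'s expected-regret guarantee $R(T)$ holds against each one, a single fresh copy of $\mathcal{A}_o$ (with seed $R$) against $\tilde{\mathcal{E}}$ incurs expected regret at most $R(T)$, where the expectation is taken over $R$ and over $S$.

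Finally I would match the two games term-by-term. In the oblivious game the round-$t$ prediction is $\hat{z}_t=f(S_{<t},x_t;R)$ with $R$ independent of $S$. By the key observation, $(S,\hat{y}_t^{(t)})$ and $(S,\hat{z}_t)$ share the same joint law for each $t$, so $\mathbb{E}[\ell(\hat{y}_t^{(t)},y_t)]=\mathbb{E}[\ell(\hat{z}_t,y_t)]$. Summing over $t$ and subtracting the comparator $\inf_{h\in\mathcal{H}}\sum_{t=1}^T\ell(h(x_t),y_t)$ (whose distribution is identical in both games since $S$ has the same marginal), the expected regret of Algorithm~\ref{alg:obliviousToAdaptive} equals the expected regret of $\mathcal{A}_o$ against $\tilde{\mathcal{E}}$, which is at most $R(T)$.

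The main obstacle is the independence bookkeeping: one must cleanly separate the (null) effect of $R_t$ on the prefix $S_{\le t}$ from its (nonzero) effect on the suffix $S_{>t}$, and recognize that only the prefix is needed to reproduce $\hat{y}_t^{(t)}$ in the oblivious coupling.
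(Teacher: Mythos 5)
Your proof is correct and rests on the same key observation as the paper's appendix proof: at round $t$ Algorithm~\ref{alg:obliviousToAdaptive} consults the fresh copy $\mathcal{A}_o^{(t)}$, whose seed $R_t$ is independent of everything that determined the prefix $(S_{<t},x_t)$, so the adaptive adversary gains nothing from having observed the predictions of the earlier copies. Where you diverge from the paper is in how you close the argument: the paper invokes Lemma~4.1 of \cite{Cesa-Bianchi2006} to pass from this conditional independence to the oblivious setting, and then averages the regret of Algorithm~\ref{alg:obliviousToAdaptive} over the $T$ i.i.d.\ copies; you instead inline that step via a direct coupling, comparing round-by-round against a \emph{single} fresh copy of $\mathcal{A}_o$ playing a mixed oblivious adversary $\tilde{\mathcal{E}}$ drawn from the marginal law of $S$. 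The two routes are essentially equivalent, but yours is self-contained (no external citation) and recovers the tight bound $R(T)$, whereas the paper's own in-text elementary argument only achieves $R(T)\log T$.

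One small imprecision worth flagging: the claim that $(S,\hat{y}_t^{(t)})$ and $(S,\hat{z}_t)$ share the same joint law is not literally true. In the adaptive game $R_t$ does affect the suffix $S_{>t}$ (through $\hat{y}_t=\hat{y}_t^{(t)}$), so $\hat{y}_t^{(t)}$ and $S_{>t}$ can be correlated; in the oblivious coupling, $\hat{z}_t$ is conditionally independent of $S_{>t}$ given $S_{\le t}$. The statement you actually need --- and which you implicitly acknowledge in your final paragraph --- is that $(S_{\le t},\hat{y}_t^{(t)})$ and $(S_{\le t},\hat{z}_t)$ have the same law. That suffices: $\ell(\hat{y}_t^{(t)},y_t)$ is a function of $(S_{\le t},\hat{y}_t^{(t)})$ alone, and the comparator $\inf_{h}\sum_t \ell(h(x_t),y_t)$ is a function of $S$ alone, whose marginal you have matched by construction of $\tilde{\mathcal{E}}$. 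With that clarification, the argument is sound.
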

\begin{proof}
The proof relies on a lemma by~\cite{Cesa-Bianchi2006}
which provides a reduction from the adaptive to the oblivious setting
given a certain condition on the responses of the online learner.
Since this lemma is somewhat technical, we defer the proof of the stated bound to the appendix
(Section \ref{sec:obliviousToAdaptiveProof}),
and prove here a slightly weaker bound, which is off by a factor of $\log T$.
This weaker bound however follows from elementary arguments in a self contained manner.

Note that the algorithms $A^{(j)}$'s for $j=1\ldots T$ are i.i.d.\ (i.e.\ have independent internal randomness).
Therefore, {\it the sequence of examples 
chosen by the adversary up to time $t$ is  independent
of the predictions of $A_o^{(j)}$ whenever~$j\geq t$}, 
and thus we can use the assumed guarantee for $A_o^{(j)}$ in the oblivious setting: 
\begin{equation}\label{eq:1}
(\forall j \geq t):\mathbb{E}\Bigl[\sum_{i=1}^t \hat \ell_i^{(j)}\Bigr] \leq R(T),
\end{equation}
where $\hat \ell_i^{(j)} = \ell(y_i, \hat y_i^{(j)})$.
Similarly, it follows that
\begin{equation}\label{eq:2}
\mathbb{E} [\hat{\ell}_t] =\mathbb{E} [\hat{\ell}_t^{(t)}] =\mathbb{E} [\hat{\ell}_t^{(t+1)}] =\ldots = \mathbb{E} [\hat{\ell}_t^{(T)}] = \mathbb{E} \left [\frac{1}{T-t+1}\sum_{j=t}^T \hat{\ell}_t^{(j)} \right]~.
\end{equation}

Therefore,
\begin{align*}
\mathbb{E} \Bigl[\sum_{t=1}^T \hat{\ell}_t \Bigr] &= \mathbb{E} \Bigl[\sum_{t=1}^T \frac{1}{T-t+1}\sum_{j=t}^{T}\hat \ell_t^{(j)} \Bigr] \tag{by Equation~\ref{eq:2}}\\
                                                  &=\mathbb{E} \Bigl[\sum_{j=1}^T \sum_{t=1}^j \frac{\hat\ell_t^{(j)}}{T-t+1} \Bigr]\\
                                                  &\leq \mathbb{E} \Bigl[\sum_{j=1}^T \sum_{t=1}^j \frac{\hat\ell_t^{(j)}}{T-j+1} \Bigr]\\
                                                  &=  \sum_{j=1}^T \frac{\mathbb{E}[\sum_{t=1}^j\hat\ell_t^{(j)}]}{T-j+1}\\
                                                  &\leq \sum_{j=1}^T \frac{R(T)}{T-j+1}\tag{by Equation~\ref{eq:1}}\\
                                                  &\leq R(T)\log T.\\
\end{align*}
\end{proof}

\subsection{Applying Online Boosting}
In this part we apply an online boosting algorithm due to \cite{Beygelzimer} to improve the accuracy of our weak learner. The algorithm is named Online Boosting-by-Majority (online BBM). We start by briefly describing online BBM and stating an upper bound on its expected regret. 

The Online BBM can be seen as an extension
of Boosting-by-Majority algorithm due to \cite{Freund1995}. 
Let $\texttt{WL}$ be a weak learner with an edge parameter $\gamma \in (0,1/2)$ and excessive loss $T_0$. 
The online BBM algorithm maintains $N$ copies $\texttt{WL}$, denoted by
$\texttt{WL}^{(1)},\ldots,\texttt{WL}^{(N)}$. On each round $t$
it uses a simple (unweighted) majority vote over $\texttt{WL}^{(1)},\ldots,\texttt{WL}^{(N)}$
to perform a prediction $\hat{y}_{t}$. 
The pair $(x_{t},y_{t})$ is passed to the weak learner $\texttt{WL}^{(j)}$ 
with probability that depends on the accuracy of the majority vote based on the weak
learners $\texttt{WL}^{(1)},\ldots,\texttt{WL}^{(j-1)}$ with respect to $(x_{t},y_{t})$. 
Similarly to the well-known AdaBoost algorithm by~\cite{Adaboost}, 
the worse is the accuracy of the previous weak learners, the larger is
the probability that $(x_{t},y_{t})$ is passed to $\texttt{WL}^{(j)}$ (see Algorithm 1 in \cite{Beygelzimer}).
\begin{thm} \label{thm:beygelzimer}
(\cite{Beygelzimer}) For any $T$ and any $N$, the expected
number of mistakes made by the Online Boosting-by-Majority Algorithm is bounded by\footnote{The bound in \cite{Beygelzimer} is an high probability bound. It is easy to translate it to a bound in expectation.} 
\[
\exp\left(-\frac{1}{2}N\gamma^{2}\right)T+\tilde{O}\left(\sqrt{N}(T_0+\frac{1}{\gamma})\right)
\]
In particular, if $\gamma$ and $T_0$ are constants then for any $\epsilon>0$, it suffices to pick $N=\Theta(\ln(1/\epsilon))$ weak learners to obtain an upper bound of 
\begin{equation}\label{eq:boosting}
O(T\epsilon+\ln(1/\epsilon))    
\end{equation}

on the expected number of mistakes. 
\end{thm}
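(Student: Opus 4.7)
My plan is to adapt Freund's potential-function analysis of Boosting-by-Majority to the online setting. The central object is a potential $\phi_j(s)$ which, for weak learner index $j$ and current ``margin'' $s$ (the signed difference between correct and incorrect predictions on the example in question among the first $j$ weak learners), equals the probability that a $\pm 1$ random walk taking $N-j$ further steps, each $+1$ with probability $1/2+\gamma$ and $-1$ otherwise, terminates at a value $\leq 0$. By construction $\phi_j(s)$ is a binomial tail satisfying the one-step drift identity $\phi_j(s) = (1/2+\gamma)\phi_{j+1}(s+1) + (1/2-\gamma)\phi_{j+1}(s-1)$, and $\phi_0(0) \leq \exp(-N\gamma^2/2)$ by Hoeffding. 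The online BBM routes each example $(x_t,y_t)$ to $\texttt{WL}^{(j)}$ with probability proportional to $\phi_{j-1}(s_t^{(j-1)}+1) - \phi_{j-1}(s_t^{(j-1)}-1)$, which is largest on the ``borderline'' examples where an additional correct prediction would most help the majority vote.

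First I would show that on each round $t$, conditional on the routing and on whether $\texttt{WL}^{(j)}$ is correct, the expected change in $\phi_j(s_t^{(j)}) - \phi_{j-1}(s_t^{(j-1)})$ telescopes nicely: the drift identity implies that an unbiased weak learner would leave the potential unchanged in expectation, while the $\gamma$-edge contributes a negative drift proportional to the routing weight. Summing over $t$ and over $j$, I would obtain that the expected total of $\phi_N(s_t^{(N)})$ across all rounds is at most $\phi_0(0)\cdot T$ plus a slack of $\tilde O(\sqrt{N}(T_0 + 1/\gamma))$ coming from (i) the $T_0$ excess-loss term contributed by each of the $N$ weak learners, and (ii) an Azuma--Hoeffding deviation accounting for the randomness of the routing. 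Since a mistake on round $t$ is exactly the event $s_t^{(N)} \leq 0$, which forces $\phi_N(s_t^{(N)}) = 1$, the number of mistakes is upper bounded by this same expected total, giving the claimed $\exp(-N\gamma^2/2)\, T + \tilde O(\sqrt{N}(T_0+1/\gamma))$ bound. The ``in particular'' half of the statement follows by solving $\exp(-N\gamma^2/2) = \epsilon$ with $\gamma$ constant, yielding $N = \Theta(\ln(1/\epsilon))$.

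The main obstacle is that the stream of examples seen by $\texttt{WL}^{(j)}$ is \emph{not} an oblivious sequence: both the identities and the multiplicities of the examples it receives are adaptive functions of the predictions made by $\texttt{WL}^{(1)},\ldots,\texttt{WL}^{(j-1)}$, so the weak learning guarantee cannot be invoked verbatim. The resolution is to condition on the $\sigma$-algebra generated by the first $j-1$ weak learners: given this history, the routing indicator for each round becomes an independent Bernoulli draw whose probability depends only on the public history, and the weak learning guarantee applies against an adaptive adversary (which is what we produce in the preceding sections; if only an oblivious guarantee were available, one would first lift it via Lemma~\ref{lem:obliviousToAdaptive}). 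A martingale concentration over the routing randomness then controls the gap between the realized and expected weighted loss of $\texttt{WL}^{(j)}$, which is where the $\tilde O(\sqrt{N})$ factor enters.
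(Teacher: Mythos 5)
The paper does not prove this theorem: it is imported as a black-box result from the cited work of Beygelzimer, Kale, and Luo, with only the footnote noting that their high-probability bound must be converted to a bound in expectation. There is therefore no ``paper's proof'' to compare against.

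Your reconstruction of the cited proof correctly identifies its main technical ingredients: the binomial-tail potential $\phi_j(s)$, the one-step drift identity making the potential a supermartingale under an unbiased step, the telescope over weak learners, the observation that $\phi_N$ is the mistake indicator so summing potentials bounds mistakes, and the need to condition on the filtration generated by earlier weak learners so that the routing becomes a sequence of conditionally independent Bernoulli draws and the weak-learning guarantee (against adaptive adversaries) can be invoked. Two places where your sketch glosses over the substantive work are worth flagging. First, the slack $\tilde O\bigl(\sqrt N\,(T_0 + 1/\gamma)\bigr)$ is not obtained simply by summing $T_0$ over $N$ weak learners (which would give $N T_0$); the $\sqrt N$ arises because each learner's excess loss enters the potential telescope scaled by the local increments of the binomial CDF, which are $O(1/\sqrt N)$ by a Stirling-type estimate, and the analysis must bound the resulting weighted sum rather than the raw count. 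Second, Azuma--Hoeffding natively yields a tail bound, whereas the statement you are proving is an expectation bound, so you would still need the tail-integration step that the paper's footnote alludes to (or a direct second-moment argument). Neither is a fundamental flaw, but both are where the nontrivial content of the cited theorem actually lies, and a complete proof would need to carry them out.
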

We have collected all the pieces of our algorithm.
\begin{algorithm}
\caption{Online Learning using a Private Oracle}
\label{alg:privateToStrongOnline}
\begin{algorithmic}
\State \textbf{Horizon parameter:} $T$
\State $\epsilon := 1/T$
\State Weak learner $\texttt{WL}$: Algorithm \ref{alg:obliviousToAdaptive} applied to Algorithm \ref{alg:weakOnlineOblivious} 
\State Apply online BBM using $N=\Theta(\ln(1/\epsilon))=\Theta(\ln T)$ instances of $\texttt{WL}$
\end{algorithmic}
\end{algorithm}

\begin{proof} \textbf{(of Theorem \ref{thm:main})}
Combining Lemma \ref{lem:weakOblivious} and Lemma \ref{lem:obliviousToAdaptive}, we obtain that $\texttt{WL}$ is a weak online learner with an edge parameter $\gamma=1/8$ and constant excessive loss. Plugging $\epsilon=1/T$ in the accuracy parameter in Theorem \ref{thm:beygelzimer} (Equation~\ref{eq:boosting}) yields the stated bound.
\end{proof}

\section{Discussion} \label{sec:discussion}
We have considered online learning in the presence of a private learning oracle, and gave an efficient reduction from online learning to private learning.

We conclude with two questions for future research.
\begin{itemize}
    \item 
    Can our result can be extended to the approximate case?
    That is, does an efficient approximately differentially private learner for a class $\mathcal{H}$, 
    implies an efficient online algorithm with sublinear regret? Can the online learner be derived
    using only an oracle access to the private learner?
    \item 
    Can our result be extended to the agnostic setting? 
    That is, does an efficient agnostic private learner for a class $\mathcal{H}$ implies an efficient agnostic online learner for it? 
\end{itemize}

\section*{Acknowledgements}
We thank Amit Daniely for fruitful discussions on the aforementioned hardness results preventing an extension of our results to the agnostic setting.

\newpage
\bibliographystyle{plain}
\bibliography{lib.bib}

\newpage
\appendix

\section{Proof of Lemma~\ref{lem:obliviousToAdaptive}}\label{sec:obliviousToAdaptiveProof}
The proof exploits Lemma 4.1 from~\cite{Cesa-Bianchi2006} which we explain next.
Let $\mathcal{A}$ be a (possibly randomized) online learner, 
and let $u_t$ denote the response of $\mathcal{A}$ in time $t\leq T$.
Then, since $\mathcal{A}$ may be randomized,~$u_t$
is drawn from a random variable $U_t$ that may depend
on the entire history: namely, on \underline{both}
the responses of $\mathcal{A}$ as well as of the adversary up to time $t$.
So 
\[
 U_t= U_t( u_1\ldots  u_{t-1}, v_1\ldots v_{t-1}),
\]
where $u_i\sim U_i$ denotes the response of $\mathcal{A}$ 
and $v_i\sim V_i$ denotes the response of the (possibly randomized) adversary on round $i < t$
(in the classifications setting, $v_i$ is the labelled example $(x_i,y_i)$, 
and $ u_i$ is the prediction rule $h_i:\mathcal{X}\to\{0,1\}$ used by~$\mathcal{A}$).
Lemma~4.1 in~\cite{Cesa-Bianchi2006} asserts that if 
$U_t$ is only a function of the $v_i$'s, namely
\begin{equation}\label{eq:cesa-bianci}
U_t= U_t(v_1\ldots v_{t-1}),
\end{equation}
then the expected regret of $\mathcal{A}$ in the adaptive setting is the same like
in the oblivious setting.

The proof now follows by noticing that Algorithm~\ref{alg:obliviousToAdaptive}
satisfies Equation~(\ref{eq:cesa-bianci}). To see this, note that at each round $t$,
Algorithm~\ref{alg:obliviousToAdaptive} uses the response of algorithm $A_o^{(t)}$ which only
{\it depends on the responses of the adversary and $A_o^{(t)}$ up to time $t$}.
In particular, it does not additionally depend the responses of Algorithm~\ref{alg:obliviousToAdaptive}
at times up to $t$.
Putting it differently, given the responses of the adversary $z_1\ldots z_{t-1}$,
one can produce the response of Algorithm~\ref{alg:obliviousToAdaptive}
at time $t$ by simulating $A_o^{(t)}$ on this sequence.

Thus, we may assume that the adversary is oblivious, 
and therefore that the sequence of examples $(x_1,y_1)\ldots (x_t,y_t)$ 
is fixed in advance and independent from the algorithms $A_o^{j}$'s. 
Now, since~$A_o^{(1)},\ldots,A_o^{(T)}$ are i.i.d.\ (i.e.\ have independent internal randomness), 
the expected loss of Algorithm \ref{alg:obliviousToAdaptive} at time $t$ satisfies 
$$
\mathbb{E} [\hat{\ell}_t] =\mathbb{E} [\hat{\ell}_t^{(t)}] =\mathbb{E} [\hat{\ell}_t^{(1)}] =\ldots = \mathbb{E} [\hat{\ell}_t^{(T)}] = \mathbb{E} \left [\frac{1}{T}\sum_{j=1}^T \hat{\ell}_t^{(j)} \right],
$$
where $\hat \ell_i^{(j)} = \ell(y_i, \hat y_i^{(j)})$. Thus, its expected number of mistakes is at most
$$
\mathbb{E} \left[\sum_{t=1}^T \hat{\ell}_t \right]= \mathbb{E} \left[ \sum_{t=1}^T  \frac{1}{T}\sum_{j=1}^T\hat{\ell}_t^{(j)}\right]=\frac{1}{T}\sum_{j=1}^T \mathbb{E} \left[\sum_{t=1}^T\hat{\ell}_t^{(j)}  \right].
$$
Therefore, the expected regret satisfies
\[
\mathbb{E}[\mathrm{Regret}_T] = \frac{1}{T} \sum_{j=1}^T\mathbb{E}[\mathrm{Regret}^{(j)}_T] \le R(T)~.
\]

\end{document}